\documentclass[preprint,12pt,authoryear]{elsarticle}

\usepackage{subfig}
\usepackage{amssymb}
\usepackage[tbtags]{amsmath}
\usepackage{booktabs}

\usepackage[colorlinks,linkcolor=blue, anchorcolor=red,citecolor=blue,CJKbookmarks=true]{hyperref}
\newtheorem{theorem}{Theorem}[section]

\usepackage{algorithm}
\usepackage{algpseudocode}

\journal{Journal of Computational and Applied Mathematics}

\begin{document}
	
\begin{frontmatter}
	\title{ An Interpretable and Stable Framework for Sparse Principal Component Analysis}
	\author[a]{Ying Hu}

	\author[a]{Hu Yang\corref{cor1}}
	\cortext[cor1]{Corresponding author}
	\ead{yh@cqu.edu.cn}
	\address[a]{College of Mathematics and Statistics, Chongqing University, Chongqing, 401331, China}
\begin{abstract}
Sparse principal component analysis (SPCA) solves the problem of poor interpretability and variable redundancy that principal component analysis (PCA) often faces with high-dimensional data. However, SPCA imposes uniform penalties on variables and lacks differential modeling of variable importance, leading to unstable performance in highly noisy or structurally complex data. We propose SP-SPCA  that introduces a single equilibrium parameter, which incorporates a single balancing parameter into the regularization framework to adjust variable penalties adaptively. This modification of the$L_2$ penalty enables flexible control over the trade-off between sparsity and variance explained while maintaining computational efficiency. Simulation studies show that our method consistently outperforms standard sparse principal component techniques in identifying sparse loading patterns, filtering noise variables, and preserving cumulative variance, especially under high-dimensional and noisy conditions. Empirical applications to crime and financial market datasets confirm the method’s practical utility. In applications to real datasets, the method selects fewer but more relevant variables, reducing complexity while maintaining explanatory power. Overall, our approach provides a robust and efficient alternative for sparse modeling in complex, high-dimensional data, offering clear advantages in stability, feature selection, and interpretability.
\end{abstract}

\begin{keyword}
 Dimension reduction\sep SPCA\sep Variable selection\sep Single-parametric principal components regression estimation\sep Noise robustness

\end{keyword}
\end{frontmatter}

\section{Introduction}\label{sec1}

Principal Component Analysis (PCA) has long been recognized as a fundamental and effective technique for dimensionality reduction in data science and multivariate statistical analysis. PCA constructs a set of orthogonal linear combinations of the original variables—known as principal components—that aim to capture as much of the data variance as possible(\cite{Pearson1901}). However, traditional PCA expresses each component as a linear combination of all variables, regardless of their individual contributions. As a result, many variables with negligible loadings remain involved, making the interpretation of principal components difficult, particularly in high-dimensional settings. This issue was highlighted early on in the work of \cite{Jeffers1967}.

To improve the interpretability of PCA, various sparsity-inducing modifications have been proposed. \cite{Cadima1995} introduced an early idea of sparse PCA by thresholding small loadings to zero. However, this approach lacked a principled rule for selecting the threshold, often leading to instability and the potential exclusion of informative variables. Later, \cite{Jolliffe2003} proposed the SCoTLASS method, which incorporated an $L_1$-norm penalty directly into the PCA objective function. This allowed sparse solutions to emerge naturally through optimization. Yet, the resulting problem was non-convex, making solutions sensitive to initialization and lacking clear guidance on parameter tuning.

In 2006, \cite{Z2006} proposed Sparse Principal Component Analysis (SPCA) by reformulating PCA as a regression problem with an elastic net penalty. SPCA successfully combined variable selection and principal component extraction, and has since spurred a series of developments. To address non-convexity, \cite{DAspremont2007} introduced DSPCA using semidefinite relaxation. \cite{Shen2008} proposed a sparse PCA framework (sPCA-rSVD) by reformulating PCA as a regularized low-rank matrix approximation problem, introducing sparsity through thresholded singular vectors with efficient iterative updates.\cite{Qi2013} later proposed a hybrid-norm SPCA model that combined $L_1$ and $L_2$ penalties, offering more flexible and accurate control over sparsity. More recently, \cite{MC2019} developed PSPCA, which transformed the NP-hard variable selection step into a sequence of tractable regression subproblems, significantly improving for SPCA have been proposed.In addition, there are many studies on the algorithmic aspects of the SPCA.\cite{Moghaddam2006} formulated Sparse PCA as a combinatorial optimization problem via discrete spectral methods, and proposed greedy (GSPCA) and exact (ESPCA) algorithms to efficiently extract sparse components with improved variance retention and interpretability. \cite{Wang2012} proposed an iterative elimination algorithm for sparse PCA, inspired by recursive feature elimination (RFE) techniques. Their method eliminates variables one by one based on criteria minimizing variance loss—either by absolute loading values or an approximated minimal variance loss (AMVL) metric—resulting in a simple yet effective approach that bypasses complex optimization.

Despite these advances, SPCA still has limitations. The use of the $L_2$ penalty tends to apply uniform shrinkage across all variables, failing to distinguish between those with strong and weak contributions to the variance. This can lead to information redundancy and reduced interpretability. Moreover, in high-dimensional, low-sample-size settings, the estimated loadings can be highly sensitive to sample variation, resulting in unstable principal components.

Inspired by recent work by \cite{Y1989} on single-parameter regularization for ridge regression, which adaptively penalizes variables according to their relative importance, we propose an improvement to the traditional SPCA framework. Specifically, we modify the $L_2$ penalty by introducing a balance parameter that allows for more flexible trade-offs between sparsity and variance retention. The resulting method, termed  Single-Parametric Sparse PCA (SP-SPCA), aims to provide principal components that are both more interpretable and more stable in high-dimensional settings.

\section{Motivation and Method Details}\label{sec2}
\subsection{Single-Parametric Sparse Principal Component Analysis }\label{sec2.1}
Traditional SPCA methods are constructed based on ridge regression estimation. However, ridge regression estimation has a significant drawback: its shrinkage effect on the regression coefficients is uniform, meaning that all covariates are shrunk to the same degree, regardless of their contribution to the response variable. We believe this aspect can be improved. The single-parameter principal component regression estimation proposed by \cite{Y1989}  effectively addresses this issue. Therefore, this work proposes a new sparse principal component analysis  method based on single-parameter principal component regression estimation. The methodological details are as follows.

Let $X$ be a $p-$dimensional random vector with covariance matrix $\Sigma$. Performing the singular value decomposition (SVD) on $X$, we obtain:

 \begin{eqnarray}
	X=UDV^{T}.
\end{eqnarray}

Thus, $X^{T}X=VD^{2}V^{T},$ where $D=diag(d_{1},d_{2},\cdots,d_{p})$,  so the eigenvalues of \( X^TX \) are \( d_i^2 \) for \( i = 1,2,\dots,p \).  

According to the definition of Principal Component Analysis (PCA), the \( i \)-th principal component is given by \(Y_i = X V_i\) with variance \( d_i^2 \).

From the definition of Single-Parameter Principal Component Regression Estimation (SPPCR), we have:

\begin{equation}
	\hat{\beta}^{\text{SPPCR}} = (X^T X + V K V^T)^{-1} X^T Y,
\end{equation}
where 

\begin{equation}
	K = \text{diag} \left( \frac{d_1^2(1 - \theta)}{d_1^2 + \theta - 1}, \frac{d_2^2(1 - \theta)}{d_2^2 + \theta - 1}, \dots, \frac{d_r^2(1 - \theta)}{d_r^2 + \theta - 1},\frac{1}{\theta}-d_{r+1},\cdots,\frac{1}{\theta}-d_{p} \right),
\end{equation}
where \( r \) satisfies \(d_r^2 \geq 1 \geq d_{r+1}^2\) and \(\theta \in (d_p^2, 1).\)

For the relationship between SPPCR estimation and Principal Component Analysis (PCA), we present the following theorem:

\begin{theorem}\label{th1}
	For $\forall i$, denote $Y_{i}=U_{i}D_{ii}$.$Y_{i}$ is the i-th principal components. Suppose $\hat{\beta}^{\text{SPPCR}}$ is the single-parameter principal component regression estimation given by 
	\begin{equation}
		\hat{\beta}^{SPPCR}=\underset{\beta}{\text{argmin}}\{\Vert Y_{i}-X\beta\Vert_{2}^{2}+\Vert\sqrt{K}V^{T}\beta\Vert_{2}^{2}\}.
	\end{equation}
Let $\hat{\nu}=\frac{\hat{\beta}^{SPPCR}}{\Vert\hat{\beta}^{SPPCR}\Vert_{1}}$, then $\hat{\nu}=V_{i}$.	   
\end{theorem}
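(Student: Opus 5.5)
The plan is to solve the penalized least-squares problem in closed form and then diagonalize everything in the eigenbasis $V$ of $X^TX$. I take $V$ to be the full $p\times p$ orthogonal matrix from the SVD, so that $VV^T=V^TV=I$. Also $\Vert\sqrt{K}V^T\beta\Vert_2^2=\beta^TVKV^T\beta$ is a genuine (convex) penalty provided every diagonal entry $k_j$ of $K$ is nonnegative, which I check below.

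\textbf{Step 1 (normal equations).} Setting the gradient to zero gives the formula already displayed for $\hat\beta^{\text{SPPCR}}$:
\[
\hat\beta^{SPPCR}=(X^TX+VKV^T)^{-1}X^TY_i ,
\]
provided the matrix is invertible.

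\textbf{Step 2 (simplify the right-hand side).} Using $X=UDV^T$ and $Y_i=U_i d_i$, together with $U^TU_i=e_i$, we get
\[
X^TY_i=VDU^TU_i d_i=d_i^2V_i .
\]

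\textbf{Step 3 (diagonalize the system matrix).} We have $X^TX+VKV^T=V(D^2+K)V^T$. Hence
\[
\hat\beta^{SPPCR}=V(D^2+K)^{-1}V^TV_i\,d_i^2=\frac{d_i^2}{d_i^2+k_i}\,V_i .
\]
So $\hat\beta^{SPPCR}=c_iV_i$ for a scalar $c_i$.

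\textbf{Step 4 (sign and nondegeneracy of the entries).} This is the only step with real content: I must show $d_j^2+k_j>0$ for all $j$, which gives invertibility and $c_i>0$, and also $k_j\ge 0$, so that $\sqrt K$ makes sense.

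For $j\le r$ we have $d_j^2\ge 1>1-\theta$ since $\theta>0$. Then $k_j=d_j^2(1-\theta)/(d_j^2+\theta-1)\ge0$, and
\[
d_j^2+k_j=\frac{d_j^4}{d_j^2+\theta-1}>0 .
\]
For $j>r$, $k_j=1/\theta-d_j$ with $d_j\le 1<1/\theta$, so $k_j>0$. Moreover $d_j^2+k_j=d_j^2-d_j+1/\theta\ge 1/\theta-1/4>0$. The entries are taken exactly as written in the definition of $K$; if $d_j^2$ was intended in place of $d_j$ the conclusion is the same.

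\textbf{Step 5 (normalization).} Since $c_i>0$, dividing by a norm removes $c_i$ entirely, leaving $V_i$ divided by that norm. With the Euclidean norm this yields exactly $V_i$, because $\Vert V_i\Vert_2=1$.

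\textbf{Main obstacle.} With the $L_1$ norm as literally written, the argument gives only $\hat\nu=V_i/\Vert V_i\Vert_1$. This is a positive multiple of $V_i$, i.e. $V_i$ up to scale, but it equals $V_i$ itself only when $\Vert V_i\Vert_1=1$. I would therefore state the conclusion as ``$\hat\nu\propto V_i$'', with exact equality under the normalization $\Vert\cdot\Vert_2$, matching the analogous ridge result of Zou et al. The positivity checks in Step 4 are where the specific form of $K$ and the constraint $\theta\in(d_p^2,1)$ are actually used.
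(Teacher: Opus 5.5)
Your argument follows the paper's proof: the paper writes $\hat\beta^{SPPCR}=(X^TX+VKV^T)^{-1}X^TXV_i=V\frac{D^2}{D^2+K}V^TV_i=\frac{D_{ii}^2}{D_{ii}^2+K_{ii}}V_i$, but it skips your Step 4, the positivity of $d_j^2+k_j$ and hence of the scalar, which additionally needs $d_i>0$ (otherwise $\hat\beta^{SPPCR}=0$ and $\hat\nu$ is undefined). Your caveat about the normalization is correct, and the paper's final ``thus $\hat\nu=V_i$'' overlooks it: with $\Vert\cdot\Vert_1$ one only gets $\hat\nu=V_i/\Vert V_i\Vert_1$, so exact equality requires the Euclidean normalization of Zou et al., and otherwise the conclusion should read $\hat\nu\propto V_i$.
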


\noindent\textbf{Proof of Theorem \ref{th1}}
Denote that $X^{T}X=VD^{2}V^{T}$ and $V^{T}V=I$, we have:
\begin{align*}
\hat{\beta}^{SPPCR}&=(X^{T}X+VKV^{T})^{-1}X^{T}XV_{i}\\\nonumber
&=V\frac{D^{2}}{D^{2}+K}V^{T}V_{i}\\\nonumber
&=V_{i}\frac{D_{ii}^{2}}{D_{ii}^{2}+K_{ii}},
\end{align*}
thus $\hat{\nu}=V_{i}$.

Since \( Y_i = U_{i}D_{ii} \), the theorem relies on the results of principal component analysis. Therefore, we consider the following theorem:	

\begin{theorem}\label{th2}
		Let \( X_i \) denote the \( i \)-th row vector of the random vector \( X \), and define \( Z = \sqrt{K}V^T \). Then,
		\begin{equation}
			(\hat{\alpha}, \hat{\beta}) = \underset{\alpha, \beta}{argmin}\{ \Vert X_i - \alpha \beta^T X_i \Vert_2^2 + \Vert Z \beta \Vert_2^2\},
		\end{equation}
		subject to the constraint \( \Vert \alpha \Vert_2^2 = 1 \). It follows that \( \hat{\beta} \propto V_1 \).
\end{theorem}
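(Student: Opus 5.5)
This is the SPPCR analogue of the Zou--Hastie--Tibshirani result that the SPCA regression criterion recovers the first loading vector. The plan is to read the objective with the loss summed over all rows $X_i$, $i=1,\dots,n$, of the data matrix. With that reading,
\[
\sum_i \Vert X_i-\alpha\beta^TX_i\Vert_2^2=\operatorname{tr}(X^TX)-2\,\alpha^TX^TX\beta+\beta^TX^TX\beta ,
\]
where we used $\Vert\alpha\Vert_2=1$. The penalty $\Vert Z\beta\Vert_2^2=\beta^TVKV^T\beta$ is a quadratic form in $\beta$. So for fixed $\alpha$ the problem is an unconstrained convex quadratic in $\beta$, since $X^TX+VKV^T=V(D^2+K)V^T$ is positive definite for $\theta\in(d_p^2,1)$.

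\textbf{Step 1 (minimize over $\beta$ for fixed $\alpha$).} Setting the gradient to zero gives
\[
\hat\beta(\alpha)=(X^TX+VKV^T)^{-1}X^TX\alpha=V\frac{D^2}{D^2+K}V^T\alpha .
\]
This is exactly the computation in the proof of Theorem \ref{th1}, with the response $X\alpha$ in place of $Y_i=XV_i$.

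\textbf{Step 2 (profile out $\beta$ and optimize over $\alpha$).} Substituting $\hat\beta(\alpha)$ back, the objective becomes
\[
\operatorname{tr}(X^TX)-\alpha^TX^TX(X^TX+VKV^T)^{-1}X^TX\alpha=\operatorname{tr}(X^TX)-\alpha^TV\frac{D^4}{D^2+K}V^T\alpha .
\]
So $\hat\alpha$ maximizes the Rayleigh quotient of $V\,\mathrm{diag}\!\left(d_j^4/(d_j^2+K_{jj})\right)V^T$ over the unit sphere. It is therefore the eigenvector $V_{j^*}$ with $j^*=\arg\max_j d_j^4/(d_j^2+K_{jj})$. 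Plugging back into Step 1 gives $\hat\beta=\frac{d_{j^*}^2}{d_{j^*}^2+K_{j^*j^*}}V_{j^*}\propto V_{j^*}$.

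\textbf{Step 3 (show $j^*=1$).} This is the main obstacle, because $K$ is not a multiple of the identity and could in principle reorder the eigenvalues. I would compute $g_j=d_j^4/(d_j^2+K_{jj})$ explicitly in the two regimes.

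For $j\le r$ we have $d_j^2+K_{jj}=d_j^4/(d_j^2+\theta-1)$, so $g_j=d_j^2+\theta-1$. This is increasing in $d_j^2$.

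For $j>r$ we have $d_j^2\le 1$, so $d_j^2+\theta-1\le\theta$. Here $g_j=d_j^4/(d_j^2+1/\theta-d_j)$, taking $K_{jj}$ as written (presumably $d_j^2$ is intended). In either case this regime should give values bounded by something like $\theta d_j^4\le\theta$.

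I would then argue that $g_1=d_1^2+\theta-1\ge g_j$ for all $j$. Monotonicity handles $j\le r$, and a direct comparison handles $j>r$, using $d_1^2\ge 1$ so that $g_1\ge\theta\ge g_j$. If $r=0$, all indices fall in the second regime, and I would check monotonicity of $t\mapsto t^2/(t+1/\theta-t)=\theta t^2$, which holds under the $d_j^2$ reading.

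Ties, or a possible typo in $K$, might require assuming $d_1>d_2$ or interpreting $K_{jj}=1/\theta-d_j^2$. I would state this as the convention needed for the uniqueness claim $\hat\beta\propto V_1$.
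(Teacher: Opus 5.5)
Your proposal is correct and takes the paper's route: expand the summed loss using $\Vert\alpha\Vert_2=1$, solve the normal equations for $\beta$ at fixed $\alpha$, and reduce $\hat\alpha$ to the top eigenvector of $V\frac{D^4}{D^2+K}V^T$. Your Step 3 adds the justification the paper omits when it simply asserts $\hat\alpha=sV_1$: you compute $g_j=d_j^2+\theta-1$ for $j\le r$ and $g_j=\theta d_j^4\le\theta$ for $j>r$ under the corrected $K_{jj}=1/\theta-d_j^2$, which puts the maximum at $j=1$ when $d_1>d_2$ (and the bound $g_j\le\theta$ for $j>r$ also holds with the literal entry $1/\theta-d_j$).
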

c\noindent\textbf{Proof of Theorem \ref{th2}}
\begin{align*}
 \sum_{i=1}^{n}\Vert X_{i}-\alpha\beta^{T}X_{i}\Vert_{2}^{2}&=\sum_{i=1}^{n}TrX_{i}^{T}(I-\beta\alpha^{T})(I-\alpha\beta^{T})X_{i}\\\nonumber
 &=\sum_{i=1}^{n}X_{i}X_{i}^{T}Tr(I-\beta\alpha^{T})(I-\alpha\beta^{T})\\\nonumber
 &=Tr(I-\beta\alpha^{T}-\alpha\beta^{T}+\beta\alpha^{T}\alpha\beta^{T})X^{T}X\\
 &=TrX^{T}X+Tr\beta^{T}X^{T}X\beta-2Tr\alpha^{T}X^{T}X\beta \\
 &=TrX^{T}X+\beta^{T}X^{T}X\beta-2\alpha X^{T}X \beta.
\end{align*}
Since $Z=\sqrt{K}V^{T}$, then
\begin{eqnarray*}
	\Vert Z\beta\Vert^{2}_{2}=\beta^{T}Z^{T}Z\beta=\beta^{T}VKV^{T}\beta,
\end{eqnarray*}
thus 
\begin{eqnarray*}
	\sum_{i=1}^{n}\Vert X_{i}-\alpha\beta^{T}X_{i}\Vert_{2}^{2}+\Vert Z\beta\Vert^{2}_{2}=TrX^{T}X-2\alpha X^{T}X \beta+ \beta^{T}(X^{T}X+VKV^{T})\beta.
\end{eqnarray*}
 For a fixed \( \alpha \), the minimum value of the above equation is
 \begin{eqnarray*}
 \beta=(X^{T}X+VKV^{T})^{-1}X^{T}X\alpha,
 \end{eqnarray*}
then we have
\begin{eqnarray}
		\sum_{i=1}^{n}\Vert X_{i}-\alpha\beta^{T}X_{i}\Vert_{2}^{2}+\Vert Z\beta\Vert^{2}_{2}=TrX^{T}X-\alpha X^{T}X(X^{T}X+VKV^{T})^{-1}X^{T}X\alpha.
\end{eqnarray}
Therefore, 
\begin{eqnarray}
	\hat{\alpha}=\underset{\alpha}{\text{argmax}}\{\alpha X^{T}X(X^{T}X+VKV^{T})^{-1}X^{T}X\alpha\}	
\end{eqnarray}
\begin{eqnarray*}
	s.t.\alpha^{T}\alpha=1,
\end{eqnarray*}
denote that $\hat{\beta}=(X^{T}X+VKV^{T})^{-1}X^{T}X\hat{\alpha}$ and $X^{T}X=VD^{2}V^{T}$, we have
 \begin{eqnarray}
 	X^{T}X(X^{T}X+VKV^{T})^{-1}X^{T}X=V\frac{D^{4}}{D^{2}+K}V^{T}.
 \end{eqnarray}
Thus $\hat{\alpha}=sV_{1},s=1 or -1$, then $\hat{\beta}=s\frac{D_{11}^{2}}{D_{11}^{2}+K_{11}}V_{1}$

Theorem \ref{th2} does not rely on the results of principal component analysis but derives the first principal component directly based on regression. Next, we extend the result of Theorem \ref{th2} to the first \( k \) principal components.
	
\begin{theorem}\label{th3}
		Suppose we consider the first \( k \) principal components. Define \( A_{p \times k} = (\alpha_1, \alpha_2, ..., \alpha_k) \) and \( B_{p \times k} = (\beta_1, \beta_2, ..., \beta_p) \), with \( Z = \sqrt{K}V^T \). Then,
		\begin{equation}
			(\hat{A}, \hat{B}) = \underset{A,B}{argmin}\{\sum_{i=1}^{n} \Vert X_i - A B^T X_i \Vert_2^2 + \sum_{j=1}^{k} \Vert Z \beta_j \Vert_2^2\},
		\end{equation}
		subject to the constraint \( \Vert A \Vert_2^2 = I_k \), implying that \( \hat{\beta}_j \propto V_j \) for \( j = 1,2,...,k \).
\end{theorem}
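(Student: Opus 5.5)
The argument follows the proof of Theorem \ref{th2}, now with matrices. Read the constraint as $A^{T}A=I_k$. Summing over the rows $X_i$, the loss becomes a trace:
\begin{align*}
\sum_{i=1}^{n}\Vert X_i-AB^{T}X_i\Vert_2^2
&=\mathrm{Tr}\,X^{T}X-2\,\mathrm{Tr}\,A^{T}X^{T}XB+\mathrm{Tr}\,B^{T}X^{T}XB .
\end{align*}
Here we use $A^{T}A=I_k$, which gives $\mathrm{Tr}(BA^{T}AB^{T}X^{T}X)=\mathrm{Tr}(B^{T}X^{T}XB)$. Adding $\sum_j\Vert Z\beta_j\Vert_2^2=\mathrm{Tr}\,B^{T}VKV^{T}B$, the objective is
\[
\mathrm{Tr}\,X^{T}X-2\,\mathrm{Tr}\,A^{T}X^{T}XB+\mathrm{Tr}\,B^{T}(X^{T}X+VKV^{T})B .
\]
Since $K$ has positive diagonal for $\theta\in(d_p^2,1)$, the matrix $M:=X^{T}X+VKV^{T}=V(D^{2}+K)V^{T}$ is positive definite.

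For fixed $A$, the objective separates over the columns $\beta_j$. Each column solves a ridge-type problem exactly as in Theorem \ref{th2}, so
\[
\beta_j=M^{-1}X^{T}X\alpha_j,\qquad\text{i.e.}\qquad B=M^{-1}X^{T}XA .
\]
Substituting back leaves $\mathrm{Tr}\,X^{T}X-\mathrm{Tr}\,A^{T}X^{T}XM^{-1}X^{T}XA$. The problem therefore reduces to
\[
\hat A=\arg\max_{A^{T}A=I_k}\mathrm{Tr}\bigl(A^{T}V\tfrac{D^{4}}{D^{2}+K}V^{T}A\bigr),
\]
using the diagonalisation already computed in the proof of Theorem \ref{th2}.

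The step needing care is this trace maximisation over Stiefel matrices. By Ky Fan's maximum principle, the maximum is attained when the columns of $\hat A$ span the top-$k$ eigenspace of $V\frac{D^4}{D^2+K}V^{T}$. The eigenvectors of this matrix are the columns of $V$, with eigenvalues $g(d_i^2):=d_i^4/(d_i^2+K_{ii})$. To identify the top-$k$ eigenspace with $\mathrm{span}(V_1,\dots,V_k)$ we must check that $g(d_i^2)$ is decreasing in $i$, i.e.\ that the reweighting by $K$ does not reorder the eigenvalues.

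On the first block, $d_i^2+K_{ii}=d_i^2\,\frac{d_i^2}{d_i^2+\theta-1}$, so $g(d_i^2)=d_i^2+\theta-1$, which is increasing in $d_i^2$. On the second block, $g$ equals $d_i^4/(d_i^2+1/\theta-d_i)$. I would verify its monotonicity directly, assuming distinct singular values and using $\theta<1$. I would also check that the first-block values dominate the second-block values, and if necessary impose this ordering as a mild assumption.

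Ky Fan gives $\hat A=V_{[k]}Q$ for an orthogonal $k\times k$ matrix $Q$. When the eigenvalues $g(d_i^2)$ are distinct, the objective $\mathrm{Tr}(A^{T}SA)$ with $S$ diagonal in the $V$-basis forces $Q$ to be a signed permutation: at a stationary point $A^{T}SA$ must be diagonal, and the ordering fixes the permutation. Hence $\hat\alpha_j=s_jV_j$ with $s_j=\pm1$. Finally,
\[
\hat\beta_j=M^{-1}X^{T}X\hat\alpha_j=s_j\frac{D_{jj}^2}{D_{jj}^2+K_{jj}}\,V_j\propto V_j,
\]
as in Theorem \ref{th2}.
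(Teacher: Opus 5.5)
Your reduction is the same as the paper's own argument: trace expansion, then column-wise ridge minimisation giving $B=(X^TX+VKV^T)^{-1}X^TXA$, then an eigenproblem for $V\frac{D^4}{D^2+K}V^T$. The eigenvalue-ordering check you deferred, which the paper skips, actually goes through with no extra assumption. On the second block, with $t=d_i\in[0,1]$, the function $g(t)=t^4/(t^2-t+1/\theta)$ has derivative $t^3(2t^2-3t+4/\theta)/(t^2-t+1/\theta)^2\ge 0$, because the quadratic has negative discriminant when $1/\theta>1$. So $g$ is increasing there with $g\le g(1)=\theta$. On the first block $g=d_i^2+\theta-1\ge\theta$.

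The genuine gap is your last step. You claim that at a stationary point of $\mathrm{Tr}(A^TSA)$ on $\{A^TA=I_k\}$ the matrix $A^TSA$ must be diagonal. That is false. The first-order condition is only $SA=A(A^TSA)$, i.e.\ $\mathrm{range}(A)$ is $S$-invariant. Moreover $\mathrm{Tr}(Q^TA^TSAQ)=\mathrm{Tr}(A^TSA)$ for every orthogonal $Q$, so every $V_{[k]}Q$ is a global maximiser. Diagonality would be forced only by a weighted objective $\mathrm{Tr}(NA^TSA)$ with distinct weights. For the same reason the ordering does not fix a column permutation either.

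In fact the full objective is invariant under $(A,B)\mapsto(AQ,BQ)$, since both $AB^T$ and $\mathrm{Tr}(B^TVKV^TB)$ are unchanged. So (assuming $d_k>d_{k+1}$) the minimisers are exactly $\hat A=V_{[k]}Q$ and $\hat B=V_{[k]}\Delta_kQ$, where $\Delta_k=\mathrm{diag}\bigl(D_{jj}^2/(D_{jj}^2+K_{jj})\bigr)_{j\le k}$. The $j$-th column of $\hat B$ is proportional to $V_j$ only if $q_j=\pm e_j$. What is provable is $\mathrm{span}(\hat\beta_1,\dots,\hat\beta_k)=\mathrm{span}(V_1,\dots,V_k)$, with $\hat\beta_j\propto V_j$ for the representative $Q=I$ (or any diagonal sign matrix). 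The paper's proof simply asserts $\hat\alpha_j=s_jV_j$ as ``the'' solution of the eigenproblem, i.e.\ it tacitly makes this choice. You should make that choice explicitly, or state the result up to rotation, rather than claim a uniqueness that does not hold for $k\ge 2$.
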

\noindent\textbf{Proof of Theorem \ref{th3}}	
By analogy with the proof of the preceding theorem, we can similarly obtain
 \begin{eqnarray*}
 	\sum_{i=1}^{n} \Vert X_i - A B^T X_i \Vert_2^2 + \sum_{j=1}^{k} \Vert Z \beta_j \Vert_2^2=TrX^{T}X+\sum_{j=1}^{k} (-2\alpha_{j}^{T}X^{T}X\beta_{j}+\beta_{J}^{T}(X^{T}X+VKV^{T})\beta_{j}).
 \end{eqnarray*}
 For a fixed \( A \), the minimum value of the above equation is
\begin{eqnarray*}
	B=(X^{T}X+VKV^{T})^{-1}X^{T}XA,
\end{eqnarray*}
thus 
\begin{eqnarray}
	\hat{A}=\underset{A}{\text{argmax}}\{TrAX^{T}X(X^{T}X+VKV^{T})^{-1}X^{T}XA\},	
\end{eqnarray}
this is an eigenvalue decomposition problem, whose solution is $\hat{\alpha_{j}}=s_{j}V_{j},s_{j}=1 or -1$, then $\hat{\beta_{j}}=s_{j}\frac{D_{jj}^{2}}{D_{jj}^{2}+K_{jj}}V_{j}$ for $j=1,2,\cdots,k$.

	In this paper, we incorporate \( L_1 \) regularization to propose an interpretable and stable framework for sparse principal component analysis, namely the Single-Parametric Sparse Principal Component Analysis (SP-SPCA):

    \begin{equation}
		(\hat{A}, \hat{B}) = \underset{A,B}{argmin}\{\sum_{i=1}^{n} \Vert X_i - A B^T X_i \Vert_2^2 + \sum_{j=1}^{k} \Vert Z \beta_j \Vert_2^2 + \sum_{j=1}^{k} \lambda_j \Vert \beta_j \Vert_1\}.
	\end{equation}
	\begin{eqnarray*}
		s.t.A^{T}A=I_{k},
	\end{eqnarray*}

\subsection{Algorithm for solving SP-SPCA}
We know that $A$ is an orthogonal matrix. Let $A_\perp$ be any matrix whose columns are orthogonal to those of $A$, such that $[A; A_\perp]$ forms a $p \times p$ orthogonal matrix. Therefore, we have:

\begin{align*}
	\sum_{i=1}^{n} \| X_i - AB^T X_i \|_2^2 &= \| X - XBA^T \|_2^2  \\
	&= \| X A_\perp \|_2^2 + \| X A - X B \|_2^2  \\
	&= \| X A_\perp \|_2^2 + \sum_{j=1}^{k} \| X \alpha_j - X \beta_j \|_2^2 
\end{align*}

First, consider the problem of solving $B$ given a fixed $A$. For $j = 1, \cdots, k$, let $Y_j^* = X \alpha_j$. According to the analysis above, we have:

\begin{equation}\label{221}
	(\hat{A}, \hat{B}) = \arg \min_{A, B} \left\{ \| X A_\perp \|_2^2 + \sum_{j=1}^{k} \left[ \| Y_j^* - X \beta_j \|_2^2 +  \Vert Z \beta_j \Vert_2^2 +  \lambda_j \Vert \beta_j \Vert_1 \right] \right\}.
\end{equation}

Since $A$ is fixed, the problem above reduces to:

\begin{equation}\label{eq222}
	\hat{\beta}_j = \arg \min_{\beta_j} \left\{ \| Y_j^* - X \beta_j \|_2^2 + \Vert Z \beta_j \Vert_2^2 +  \lambda_j \Vert \beta_j \Vert_1 \right\}.
\end{equation}

We refer to the above optimization problem as the **SPPCSO problem** (Sparse Principal Component with Structured Optimization). On the other hand, given a fixed $B$, the goal is to minimize $\sum_{i=1}^{n} \| X_i - A B^T X_i \|_2^2 = \| X - X B A^T \|_2^2$ subject to the constraint $A^T A = I$. The solution is given by the following theorem.

\begin{theorem}\label{th2}
	Let $M_{n \times p}$ and $N_{n \times k}$ be two matrices. Consider the following minimization problem:
	
	\[
	\hat{A} = \arg \min_A \| M - N A^T \|_2^2, \quad \text{subject to } A^T A = I_{k \times k}.
	\]
	
	Assume the SVD of $M^T N$ is $U D V^T$. Then the solution is
	
	\[
	\hat{A} = U V^T.
	\]
\end{theorem}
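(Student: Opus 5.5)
This is the reduced-rank Procrustes problem, and the plan is to turn the minimization into a trace maximization and then bound that trace using the SVD of $M^TN$. Read $\|\cdot\|_2$ as the Frobenius norm, as elsewhere in the paper.

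\textbf{Step 1: reduce to a trace.} Expanding the squared norm gives
\[
\|M-NA^T\|_2^2=\mathrm{Tr}(M^TM)-2\,\mathrm{Tr}(A^TM^TN)+\mathrm{Tr}(AN^TNA^T).
\]
Under $A^TA=I_k$ we have $\mathrm{Tr}(AN^TNA^T)=\mathrm{Tr}(N^TNA^TA)=\mathrm{Tr}(N^TN)$, which does not depend on $A$. The first term is also free of $A$. So minimizing the objective is the same as maximizing $\mathrm{Tr}(A^TM^TN)$ over $p\times k$ matrices with orthonormal columns.

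\textbf{Step 2: substitute the SVD.} Write $M^TN=UDV^T$ in thin form, with $U\in\mathbb{R}^{p\times k}$ having orthonormal columns, $D=\mathrm{diag}(d_1,\dots,d_k)\ge 0$, and $V\in\mathbb{R}^{k\times k}$ orthogonal. Then
\[
\mathrm{Tr}(A^TUDV^T)=\mathrm{Tr}(V^TA^TUD)=\sum_{j=1}^k d_j\,w_{jj},\qquad W:=V^TA^TU .
\]
The matrix $W$ is a product of matrices whose operator norm is at most $1$, so $\|W\|_{op}\le 1$. In particular $|w_{jj}|=|e_j^TWe_j|\le 1$. Since every $d_j\ge 0$, this gives $\mathrm{Tr}(A^TM^TN)\le\sum_j d_j$.

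\textbf{Step 3: attain the bound.} Take $A=UV^T$. It is feasible, since $A^TA=VU^TUV^T=I_k$. It gives $W=V^TVU^TUV^TV\cdot$, more directly $W=I_k$, so the trace equals $\sum_j d_j$. Hence $\hat A=UV^T$ is a minimizer. If $M^TN$ has full rank $k$, every $d_j>0$, and equality forces $w_{jj}=1$ for all $j$; this pins down $W=I_k$ and gives uniqueness. In the rank-deficient case the minimizer need not be unique, and one only claims that $UV^T$ is a solution.

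\textbf{Expected difficulty.} The only step that needs care is the bound $|w_{jj}|\le 1$, which is the von Neumann-type inequality. I would justify it by the operator-norm argument above rather than by the Lagrangian stationarity condition $M^TN=A\Lambda$ with $\Lambda$ symmetric. The stationarity route only identifies critical points and would still require ruling out saddle points.
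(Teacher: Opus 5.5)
The paper states this result without proof; it is the reduced-rank Procrustes theorem taken from Zou, Hastie and Tibshirani (2006). Your argument is correct and is the standard proof of that result: reduce to maximizing $\mathrm{Tr}(A^TM^TN)$, bound $w_{jj}=(AVe_j)^T(Ue_j)\le 1$ (your operator-norm bound amounts to Cauchy--Schwarz on unit vectors), and note that equality holds at $A=UV^T$.

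Your optional uniqueness remark skips one step. $W=I_k$ only gives $U^TA=V^T$. You still need $I_k=A^TA=A^TUU^TA+A^T(I-UU^T)A=I_k+A^T(I-UU^T)A$, which forces $(I-UU^T)A=0$ and hence $A=UV^T$.
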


To solve equation(\ref{eq222}), we only need the matrix $X^T X$. The reason is as follows:

\begin{align}\label{223}
	\| Y_j^* - X \beta_j \|_2^2 + \Vert Z \beta_j \Vert_2^2 +  \lambda_j \Vert \beta_j \Vert_1
	&= (\alpha_j - \beta_j)^T X^T X (\alpha_j - \beta_j) + \Vert Z \beta_j \Vert_2^2 +  \lambda_j \Vert \beta_j \Vert_1.
\end{align}

This applies also to computing the SVD in $(X^T X) B = U D V^T$. Note that $\frac{1}{n} X^T X$ is the sample covariance matrix. If the true covariance matrix $\Sigma$ of $X$ is known, we may substitute $\Sigma$ in place of $X^T X$ in~\eqref{223}, leading to the following form:

\begin{align}\label{3.2.4}
	(\alpha_j - \beta_j)^T \Sigma (\alpha_j - \beta_j) + \Vert Z \beta_j \Vert_2^2 +  \lambda_j \Vert \beta_j \Vert_1. 
\end{align}

Although equation~\eqref{3.2.4} is not in the standard SPPCSO form, it can be transformed accordingly. Let

\[
Y^* = \Sigma^{\frac{1}{2}} \alpha_j, \quad X^* = \Sigma^{\frac{1}{2}},
\]

then the optimization becomes:

\[
\hat{\beta}_j = \arg \min_{\beta_j} \left\{ \| Y^* - X^* \beta_j \|_2^2 + \Vert Z \beta_j \Vert_2^2 +  \lambda_j \Vert \beta_j \Vert_1 \right\}.
\]

Hence, the core of the algorithm is to solve the SPPCSO problem. The SPPCSO problem is equivalent to:

\begin{eqnarray}\label{eq11}
	\hat{\beta} := \arg \min_{\beta} \left\{ \| y - X \beta \|_2^2 + \| Z \beta \|_2^2 + \lambda \| \beta \|_1 \right\}.
\end{eqnarray}

We define an augmented dataset $(\tilde{y}, \tilde{X})$ by:

\begin{eqnarray*}
	\tilde{X} = \begin{pmatrix}
		X\\ Z
	\end{pmatrix}, \quad
	\tilde{y} = \begin{pmatrix}
		y\\ 0
	\end{pmatrix},
\end{eqnarray*}

\noindent
Then the SPPCSO problem can be rewritten as:

\begin{eqnarray*}
	\hat{\beta} := \arg \min_{\beta} \left\{ \| \tilde{y} - \tilde{X} \beta \|_2^2 + \lambda \| \beta \|_1 \right\}.
\end{eqnarray*}

This is clearly a standard Lasso problem and can be efficiently solved using coordinate descent. We describe the algorithm in detail below.

\begin{algorithm}
	\caption{Coordinate Descent Algorithm for SPPCSO}
	\label{alg:SPPCSO}
	\begin{algorithmic}[1]
		\Require Initial estimate $\hat{\beta}^{(0)}$, parameters $\lambda$, $\theta$, tolerance $\epsilon = 10^{-4}$
		\State Define the active set $\xi^{(0)} = \{ j \in \{1,\dots,p\} : \hat{\beta}_j^{(0)} \neq 0 \}$
		
		\vspace{0.5em}
		\State Construct the augmented data:
		\[
		X^* = \begin{pmatrix} X \\ Z \end{pmatrix}, \quad
		y^* = \begin{pmatrix} y \\ 0 \end{pmatrix}
		\]
		where $Z = \sqrt{K} U^T$, $K = \mathrm{diag}\left( \frac{d_1(1-\theta)}{d_1 + \theta - 1}, \dots, \frac{d_r(1-\theta)}{d_r + \theta - 1}, \frac{1}{\theta} - d_{r+1}, \dots, \frac{1}{\theta} - d_p \right)$
		
		\State $U$ is the orthogonal matrix from SVD of $X_{\xi^{(0)}}^T X_{\xi^{(0)}}$, and $d_i$ is its $i$-th eigenvalue
		\Ensure Optimized coefficients $\hat{\beta}^{\text{SPPCSO}}$
		
		\vspace{0.5em}
		\State Initialize $\hat{\beta}^{(0)} = \hat{\beta}^{\text{Lasso}}$, set iteration counter $k = 0$
		\Repeat
		\For{$j = 1$ to $p$}
		\State Compute partial residual:
		\[
		r_j = (X_j^*)^T \left( y^* - X_{-j}^* \hat{\beta}^{(k)}_{-j} \right)
		\]
		\State Update:
		\[
		\hat{\beta}_j^{(k+1)} = S(r_j, \lambda), \quad S(r, \lambda) = \text{sign}(r) \cdot \max(|r| - \lambda, 0)
		\]
		\EndFor
		\If{$\|\hat{\beta}^{(k+1)} - \hat{\beta}^{(k)}\| < \epsilon$}
		\State $\hat{\beta}^{\text{SPPCSO}} = \hat{\beta}^{(k+1)}$
		\State \Return $\hat{\beta}^{\text{SPPCSO}}$
		\EndIf
		\State $k \gets k + 1$
		\Until{convergence}
	\end{algorithmic}
\end{algorithm}

\begin{enumerate}
	\item Initialize $A = V[:, 1:k]$, i.e., the first $k$ standard principal components obtained from PCA;

	\item Given a fixed $A = [\alpha_1, \cdots, \alpha_k]$, solve the following $k$ SPPCSO problems:
	\[
	\beta_j = \arg \min_{\beta} \left\{ (\alpha_j - \beta)^T X^T X (\alpha_j - \beta) + \| Z \beta_j \|_2^2 + \lambda_j \| \beta_j \|_1 \right\};
	\]
	
	\item For a fixed $B = [\beta_1, \cdots, \beta_k]$, compute the SVD decomposition:
	\[
	X^T X B = U D V^T,
	\]
	and update $A = U V^T$;
	
	\item Repeat Steps 2 and 3 until convergence;
	
	\item Normalize the sparse loadings: $\tilde{V}_j = \dfrac{\beta_j}{\| \beta_j \|_1}, \quad j = 1, 2, \cdots, k.$
\end{enumerate}

\section{Simulations}\label{sec4}

\subsection{Preliminary Validation on Low-Dimensional Data}

Similar to the numerical simulations developed by \cite{Z2006} in SPCA, we first  verify the performance of SP-SPCA under a low-dimensional data structure. We generate the following three variables:

\begin{equation}
	\begin{aligned}
		&V_1 \sim N(0, 50), \\
		&V_2 \sim N(0, 300),\\
		&V_3 = -0.3V_1 + 0.925V_2 + \varepsilon,\quad \varepsilon \sim N(0, 1).
	\end{aligned}
\end{equation}

$V_1$, $V_2$, and $V_3$ are independent of each other. We give the following 10 observed variables:
\begin{equation}
	\begin{aligned}
		&X_i = V_1 + \varepsilon_i^1,\quad \varepsilon_i^1 \sim N(0,1),\quad i = 1, 2, 3, 4, \\
		&X_i = V_2 + \varepsilon_i^2,\quad \varepsilon_i^2 \sim N(0,1),\quad i = 5, 6, 7, 8,\\
		&X_i = V_3 + \varepsilon_i^3,\quad \varepsilon^3 \sim N(0,1),\quad i = 9, 10,
	\end{aligned}
\end{equation}
where $\varepsilon_i^j,\quad j=1,2,3,\quad i=1,2,\ldots,10$ are independent of each other. We use the covariance matrix of $(X_1, \ldots, X_{10})$ to compute PCA, SPCA, SP-SPCA.

\begin{table}[htbp]
	\tabcolsep 0pt \vspace*{-12pt}
	\renewcommand{\arraystretch}{1.3}
	\def\temptablewidth{1.0\textwidth}
	\centering
	\caption{Loading matrix and explained variance for PCA, SPCA, and SP-SPCA on a low-dimensional dataset.}
	\label{tab:lowdim_tabularstar}
	\begin{tabular*}{\temptablewidth}{@{\extracolsep{\fill}} lcc|cc|cc}
		\hline\hline
		\textbf{} & \multicolumn{2}{c|}{\textbf{PCA}} & \multicolumn{2}{c|}{\textbf{SPCA}} & \multicolumn{2}{c}{\textbf{SP-SPCA}} \\
		\cline{2-7}
		& PC1 & PC2 & PC1 & PC2 & PC1 & PC2 \\
		\hline
	$X_1$ & -0.315 & 0.400 & 0   & 0.590& 0   & 0.491 \\
	$X_2$ & -0.315 & 0.349 & 0   & 0.206 & 0   & 0.501 \\
	$X_3$ & -0.315 & 0.404 & 0   &  0.566 & 0   & 0.502 \\
	$X_4$ & -0.315 & 0.396 & 0   &  0.538 & 0   & 0.498 \\
	$X_5$ &  0.317 & 0.277 & 0.388 & 0    &  0.499 & 0   \\
	$X_6$ &  0.317 & 0.225 & 0.651 & 0    &  0.501 & 0   \\
	$X_7$ &  0.317 & 0.274 & 0.336 & 0    &  0.502 & 0   \\
	$X_8$ &  0.317 & 0.276 & 0.559& 0    &  0.499 & 0   \\
	$X_9$ &  0.317 & 0.218 & 0     & 0    & 0     & 0   \\
	$X_{10}$ &  0.317 & 0.221 & 0  & 0    & 0     & 0   \\
		\hline
		\textbf{Variance} & 97.6\% & 1.44\% & 60.55\% & 10.30\% & 64.5\% & 11.1\% \\
		\hline\hline
	\end{tabular*}
\end{table}

The number of observed variables associated with the three basic variables $V_1$, $V_2$, and $V_3$ are 4, 4, and 2, respectively, thus $V_1$ and $V_2$ are of comparable significance, while the interpretability of $V_3$ is relatively low. In the principal component analysis, the cumulative variance contribution of the first two principal components has reached 99.04\%, so we only need to focus on the orthogonal sparse representation of the first two principal components, which is consistent with the number of $V_1$ and $V_2$ basic variables.

Since the variance of $V_2$ is significantly higher than that of $V_1$, ideally, the first principal component should be represented by $X_5, X_6, X_7, X_8$, i.e., the variables originate from $V_2$; and the second principal component should be represented by $X_1, X_2, X_3, X_4$, which corresponds to $V_1$. Their loading coefficients should be uniformly distributed and close in value. Since $X_9$ and $X_{10}$ are derived from $V_3$, which is a combination of $V_1$ and $V_2$ plus independent noise, their statistical properties are closer to those of $V_1$ and $V_2$, and thus their contributions to the principal components should be close to those of $X_1$ to $X_8$. In addition, the variable loading coefficients on $V_3$ should be sparsified in the principal component representation. In the results, SP-SPCA achieves feature selection consistent with SPCA, but its resulting loading matrix is closer to the ideal, with a cumulative variance explained of 75.6\%, higher than SPCA’s 70.8\%, thus retaining more information.

In summary, SP-SPCA is effective in variable screening, effectively filtering out noisy variables, while enhancing the explanatory power of variables and retaining higher information. This also indicates that SP-SPCA is more advantageous than traditional SPCA in dealing with redundant and noisy variables. Next, in order to further test the advantages of SP-SPCA in principal component analysis, we apply the model to high-dimensional datasets for simulation experiments.

\subsection{High-dimensional simulation data generation process and simulation settings}

 To evaluate the performance of the proposed SP-SPCA method for high-dimensional sparse structure identification, we designed a set of simulation experiments based on the latent factor model. The key parameter settings involved in the simulation experiments are shown in Table \ref{tab:parameter_settings}.
 
 \begin{table}[htbp]
 	\tabcolsep 0pt \vspace*{-12pt}
 	\renewcommand{\arraystretch}{1.3}
 	\def\temptablewidth{0.6\textwidth}
 	\centering
 	\caption{Key parameter settings used in the simulation studies}
 	\label{tab:parameter_settings}
 	\begin{tabular*}{\temptablewidth}{@{\extracolsep{\fill}} lc}
 		\hline\hline
 		\textbf{Parameter} & \textbf{Symbol}  \\
 		\hline
 		Sample size                & $n$   \\
 		Total number of variables  & $p$  \\
 		Number of latent factors   & $d$  \\
 		Variables per factor       & $s$   \\
 		Number of mixed variables  & $m$   \\
 		\hline\hline
 	\end{tabular*}
 \end{table}
\noindent
The specific data generation process is as follows:

\begin{itemize}
	\item We split the variables into structural variables $\mathbf{X_{struct}}$ and mixed variables $\mathbf {X_{mix}}$. First, generate the latent factor matrix $F \in \mathbb{R}^{n \times d}$ with elements obeying the standard normal distribution;
	\item Each factor $F_j,j=1,2,\cdots,d$ corresponds to the control of $s_j$ structural variables, generating a matrix of coefficients $P \in \mathbb{R}^{P_{struct} \times d}$ and superimposed on the standard normal noise to obtain the structural variable part: $\mathbf{X}_{\text{ struct}} = \mathbf{F} \mathbf{P}^\top + \varepsilon$,$\varepsilon\sim N(0,1)$;
	\item For the mixed variable part, each variable is jointly influenced by $2$ to $3$ randomly selected latent factors, with coefficients randomly generated and unit normalized, and superimposed noise to obtain $\mathbf{X}_{\text{mix}}$;
	\item Splice the above structural and mix variables to form the final data matrix $\mathbf{X}=(\mathbf{X_{struct}},\mathbf {X_{mix}})$ and center all variables.
\end{itemize}

This simulation structure can effectively simulate the existence of multiple latent factors and cross-influence between variables in real high-dimensional data, so as to verify the extraction ability and sparsity control effect of the model under different structural complexity.

In order to evaluate the performance of the model in an all-round way, we set up the following four groups of simulation experiments for different parameters:

\textit{Case 1.} \ 
We first consider a typical high-dimensional setting: the number of samples is $n = 100$ and the number of variables is $p = 300$. The number of latent factors is set to $d = 5$ and each factor controls $s = 50$ variables. In addition, $m = 50$ mixing variables are included.

\textit{Case 2.} \ 
We investigate the performance stability of SP-SPCA by increasing the variable dimension $p$ while fixing $n = 100$ and $d = 5$. We set $p = 150, 300, 450, 600, 750, 900$ while fixing the number of mixing variables to $m = p/6$, corresponding to $m = 25, 50, 75, 100, 125, 150$, and setting $s = 25, 50, 75, 100, 125, 150$. The case is designed to evaluate the stability of the method in high and ultra-high dimensional situations.

\textit{Case 3.} \ 
We evaluate the robustness of the method to crossover variables by varying the number of mixing variables $m$ while fixing $n = 100$, $p = 300$ and $d = 5$. We consider $m = 10, 50, 100, 150$, corresponding to the setting $s = 58, 50, 40, 30$. This setting is intended to test the resistance of the method to overlapping disturbances with mixed variables.

\textit{Case 4.} \ 
We investigate the ability of the method to parse complex factor structures by varying the number of latent factors $d$. The sample size is fixed at $n = 100$ and the total number of variables is fixed at $p = 300$. We consider $d = 2, 5, 10$ and the number of mixing variables is fixed at $m = 50$, adjusting $s = 125, 50, 25$ accordingly. The experiment is designed to evaluate the structure-resolving ability of SP-SPCA.

To ensure that the results are not oh-so-accidental, we repeated the experiment 50 times for each set of cases, and finally averaged the resulting cumulative variance contributions. These four sets of simulation experiments systematically evaluate the performance and robustness of the proposed method in multiple dimensions. Case 1 provides a benchmark evaluation under standard settings to verify the feasibility of the model in the underlying high-dimensional structure; Case 2 explores the stability and scalability of the method as the dimensionality gradually increases, reflecting its adaptability to ultra-high-dimensional data; Case 3 tests the robustness of the model in the presence of a non-ideal data structure by adjusting the degree of mixed-variable interferences; and Case 4 focuses on the variation in the number of latent factors to test the performance of the method in the presence of complex structures. Case 4 focuses on the variation in the number of latent factors to test the ability of the method in a complex structural discrimination task.

\subsection{Analysis of results of high-dimensional simulation data}

In the study of \cite{Shen2008} et al. it was pointed out that the cumulative variance contribution of principal component results from SPCA is greatly reduced in high dimensional data scenarios, which is due to the fact that SPCA over-penalizes the loadings, resulting in the principal component directions being too sparse at the expense of the variance explanation ability. By empirically summarizing a large number of simulations in the literature, we found that the cumulative variance contribution of SPCA is usually 50\%~65\% in medium and high dimensional data structures, while the cumulative variance contribution is below 50\% in high and extreme high dimensional cases. In this simulation experiment, since a specific data structure was predefined, the numerical comparison focuses on evaluating the cumulative variance explained by each method when producing the loading matrix closest to the ideal sparse structure. According to the previously described data setup, the ideal loading pattern is as follows: for each principal component, the variables $X_{i+1}–X_{i+50},\ i=0,50,100,150,200$ have nonzero loadings, while the remaining variables have zero loadings, with variables $X_{251}–X_{300}$ fully sparse (all loadings zero), as shown in the table \ref{tab:case1_ideal_loadings}. Therefore, we constrain the number of nonzero loadings per principal component to 50.

\begin{table}[htbp]
	\tabcolsep 0pt \vspace*{-12pt}
	\renewcommand{\arraystretch}{1.3}
	\def\temptablewidth{1.0\textwidth}
	\centering
	\caption{Case 1: Ideal Loadings Matrix Results.}
	\label{tab:case1_ideal_loadings}
	\begin{tabular*}{\temptablewidth}{@{\extracolsep{\fill}} cccccc}
		\hline\hline
		& PC1 & PC2 & PC3 & PC4 & PC5 \\
		\hline
		$X_{1}–X_{50}$ & Nonzero & 0 & 0 & 0 & 0 \\
		$X_{51}–X_{100}$ & 0 & Nonzero & 0 & 0 & 0 \\
		$X_{101}–X_{150}$ & 0 & 0 & 0 & Nonzero & 0 \\
		$X_{151}–X_{200}$ & 0 & 0 & 0 & 0 & Nonzero \\
		$X_{201}–X_{250}$ & 0 & 0 & Nonzero & 0 & 0 \\
		$X_{251}–X_{300}$ & 0 & 0 & 0 & 0 & 0 \\
		\hline\hline
	\end{tabular*}
\end{table}

\begin{figure}[htbp]
	\centering
	\begin{minipage}{0.49\textwidth}
		\centering
		\includegraphics[width=\textwidth]{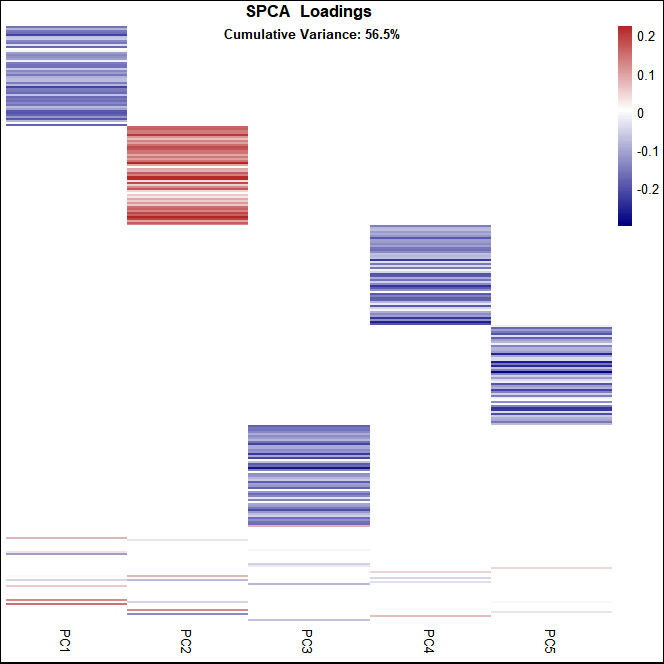}
		\caption{ loadings matrix   under SPCA method in Case 1}
		\label{Fig1}
	\end{minipage}\hfill
	\begin{minipage}{0.49\textwidth}
		\centering
		\includegraphics[width=\textwidth]{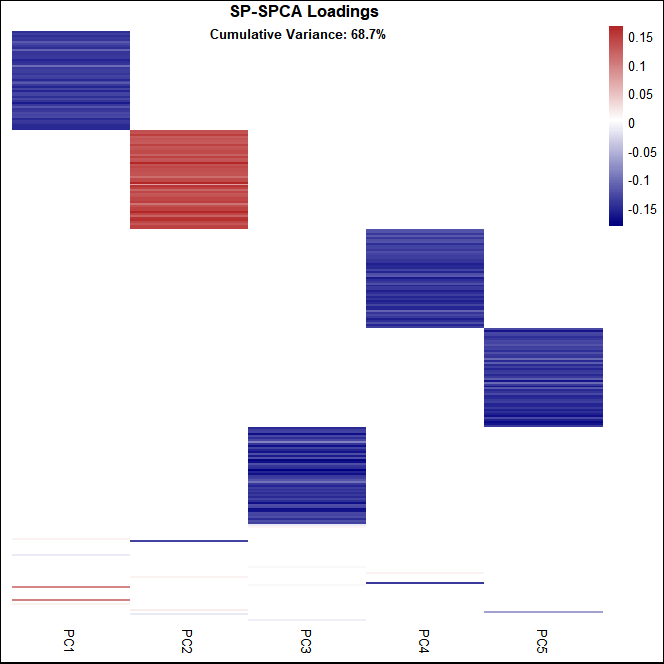}
		\caption{loadings matrix   under SP-SPCA method in Case 1 }
		\label{Fig2}
	\end{minipage}
\end{figure}

Figure \ref{Fig1} and \ref{Fig2}   present a heatmap comparison of the loading matrices obtained by the traditional SPCA and the improved SP-SPCA in Case 1. In the variable blocks $X_{i+1}–X_{i+50},\ i=0,50,100,150,200$, the SPCA heatmap shows large white areas (zeros), whereas SP-SPCA displays concentrated blue/red stripes. Moreover, in the variable block $X_{251}–X_{300}$, which should ideally be fully sparse, SPCA yields more nonzero loadings. As previously described, we constrained the number of nonzero loadings per principal component to 50; thus, these results indicate that SPCA is more prone to erroneous sparsity. Additionally, SP-SPCA achieves a higher cumulative variance explained (68.7\%) compared to SPCA (56.5\%), reducing information loss by 12.2\%. Overall, SP-SPCA maintains sparsity while accurately identifying key variables and effectively eliminating unimportant ones, whereas traditional SPCA tends to over-sparsify, potentially discarding important information (e.g., the nonzero loading of $X_{153}$ was incorrectly shrunk to zero).

\begin{table}[htbp]
	\tabcolsep 0pt \vspace*{-12pt}
	\renewcommand{\arraystretch}{1.3}
	\def\temptablewidth{1.0\textwidth}
	\centering
	\caption{Case 2: Comparison of cumulative variance explained (\%) under different dimensions $p$.}
	\label{tab:case2_dimensional_variance}
	\begin{tabular*}{\temptablewidth}{@{\extracolsep{\fill}} ccccccc}
		\hline\hline
		$p$             & 150   & 300   & 450   & 600   & 750   & 900   \\
		\hline
		SPCA            & 57\%  & 56.5\% & 29.2\% & 20\%  & 16.3\% & 14.4\% \\
		SP-SPCA         & 68.8\% & 65.7\% & 60\%   & 56.9\% & 55.9\% & 54.6\% \\
		\hline\hline
	\end{tabular*}
\end{table}

Based on the comparison results in Table \ref{tab:case2_dimensional_variance} and Figure \ref{simulation}, when the data dimension $p$ increases from 150 to 900 under identical parameter settings, SPCA’s cumulative variance explained drops sharply from a maximum of 57\% to 14.4\% (a 74.8\% decrease), indicating severe information loss under high-dimensional noise. In contrast, SP-SPCA consistently maintains a contribution rate above 54.6\% (peaking at 68.8\%), with only a 20.9\% decrease, retaining 54.6\% of the variance even at $p = 900$. These results further demonstrate the effectiveness and stability of our method for dimensionality reduction in high-dimensional data scenarios.

\begin{table}[htbp]
	\tabcolsep 0pt \vspace*{-12pt}
	\renewcommand{\arraystretch}{1.3}
	\def\temptablewidth{1.0\textwidth}
	\centering
	\caption{Case 3: Comparison of cumulative variance explained (\%) under different numbers of mixed variables $m$ with $n=100$, $p=300$. }
	\label{tab:case3_mixed_variable_variance}
	\begin{tabular*}{\temptablewidth}{@{\extracolsep{\fill}} ccccc}
		\hline\hline
		$m$             & 10    & 50     & 100    & 150    \\
		\hline
		SPCA            & 61\%  & 56.5\% & 38.1\% & 27.2\% \\
		SP-SPCA         & 71.2\%& 65.7\% & 47.3\% & 35.3\% \\
		\hline\hline
	\end{tabular*}
\end{table}

\begin{table}[htbp]
	\tabcolsep 0pt \vspace*{-12pt}
	\renewcommand{\arraystretch}{1.3}
	\def\temptablewidth{1.0\textwidth}
	\centering
	\caption{Case 4: Comparison of cumulative variance explained (\%) under different numbers of latent factors $d$ with $n=100$, $p=300$. }
	\label{tab:case4_factor_variance}
	\begin{tabular*}{\temptablewidth}{@{\extracolsep{\fill}} cccc}
		\hline\hline
		$d$             & 2     & 5      & 10     \\
		\hline
		SPCA            & 22.4\%& 56.5\% & 44\%   \\
		SP-SPCA         & 60\%  & 65.7\% & 61.1\% \\
		\hline\hline
	\end{tabular*}
\end{table}

To further assess the robustness of the methods, we gradually increased the proportion of mixed variables in the data. Mixed variables, being expressed by multiple latent factors, should not define the directions of principal components and are expected to be sparsified by the model. Therefore, increasing the proportion of mixed variables effectively raises the noise level in the data. 

As shown in Table \ref{tab:case3_mixed_variable_variance}, the cumulative variance explained by SPCA decreases sharply from 61\% to 27.2\% as $m$ increases from 10 to 150, reflecting its difficulty in distinguishing signal from noise under increasing noise conditions. In contrast, SP-SPCA, while also showing a decline from 71.2\% to 35.3\%, consistently outperforms SPCA across all levels of $m$ and retains substantial information even when $m = 150$. This indicates that SP-SPCA is more robust to noise and better at maintaining essential signal components in high-noise environments. Moreover, the performance gap between SPCA and SP-SPCA widens as $m$ increases, further highlighting the superior stability and discriminative power of our proposed method in identifying relevant variables and suppressing irrelevant noise. These results demonstrate the effectiveness of SP-SPCA in high-dimensional noisy data settings and underscore its potential for applications requiring reliable variable selection and dimensionality reduction under challenging conditions.

\begin{figure}[!htbp]
	
	\centering
	\subfloat{\includegraphics[width=1\textwidth]{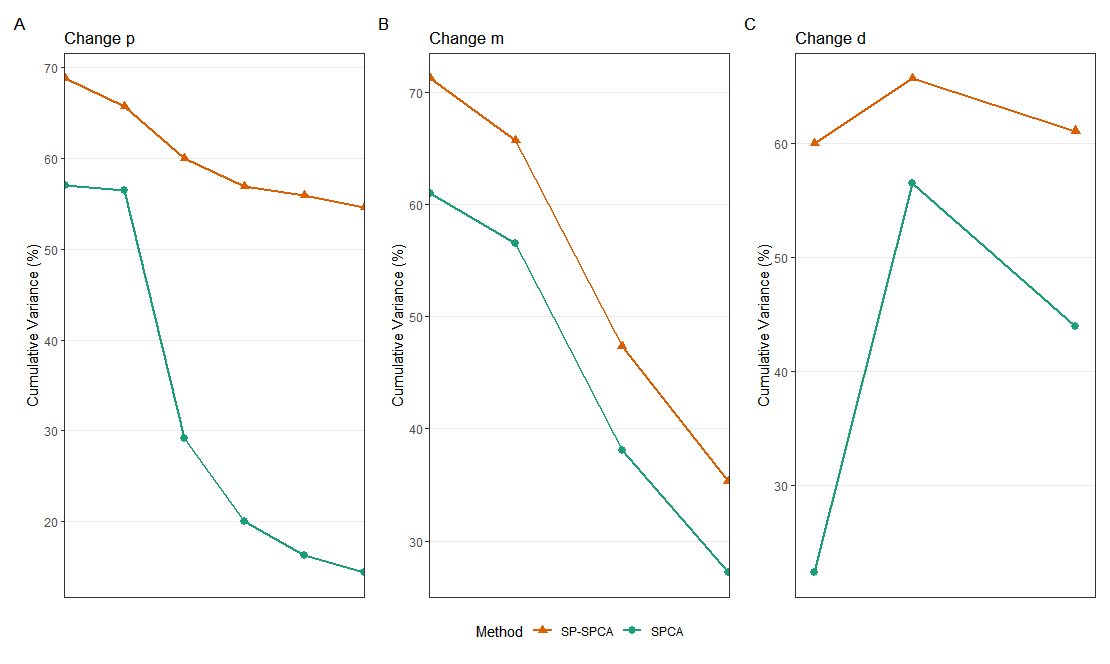}}
	
	\caption{Comparison of numerical simulation results for high-dimensional data.
	}
	\label{simulation}
\end{figure}

Since the number of latent factors $d$ reflects the structural complexity of the data, increasing $d$ effectively challenges the model’s factor resolution ability. As shown in Table \ref{tab:case4_factor_variance}, SP-SPCA consistently maintains a cumulative variance explained above 60\%, demonstrating its robustness in accurately identifying principal component directions and achieving precise sparsity in the loading matrix, even under complex data structures. In contrast, SPCA exhibits significant instability, with the cumulative variance explained dropping sharply when $d$ increases or fluctuating unpredictably, highlighting its limitations in handling high structural complexity.

The above results suggest that SP-SPCA not only performs better in variable selection under noisy or high-dimensional conditions but also adapts more effectively to intrinsic structural complexities of the data. One possible reason is that the single-parameter optimization framework enables SP-SPCA to balance sparsity and signal preservation more effectively, whereas SPCA may suffer from over-sparsification or underfitting when the factor structure becomes intricate. These findings imply that SP-SPCA has stronger generalization potential across diverse data scenarios and is better suited for practical applications where latent structures are often complex and unknown.

\section{Empirical analysis}\label{sec5}
To evaluate the performance of the proposed method on real-world data and cover as many data types as possible, we conducted empirical analyses on two representative datasets, one low-dimensional and one high-dimensional. The details of these datasets are provided in Table \ref{data}.

\begin{table}[htbp]
	\centering
	\caption{Description of datasets used for empirical analysis}
	\label{data}
	\begin{tabular*}{\textwidth}{@{\extracolsep{\fill}} lrrlll}
		\toprule
		 Name & Samples  & Features  & Type & Description & Source \\
		\midrule
		Crime        & 1994        & 99            & Regular        & Social data  & UCI     \\
		S\&P 500     & 150         & 503           & High-dimensional & Financial data & Kaggle  \\
		\bottomrule
	\end{tabular*}
\end{table}

\subsection{Crime data}
We first selected the low-dimensional Crime dataset as the initial analysis target. This dataset contains 1,994 observations and 99 variables, making it a representative example of low-dimensional data.

\begin{figure}[htbp]
	\centering
	\begin{minipage}{0.49\textwidth}
		\centering
		\includegraphics[width=\textwidth]{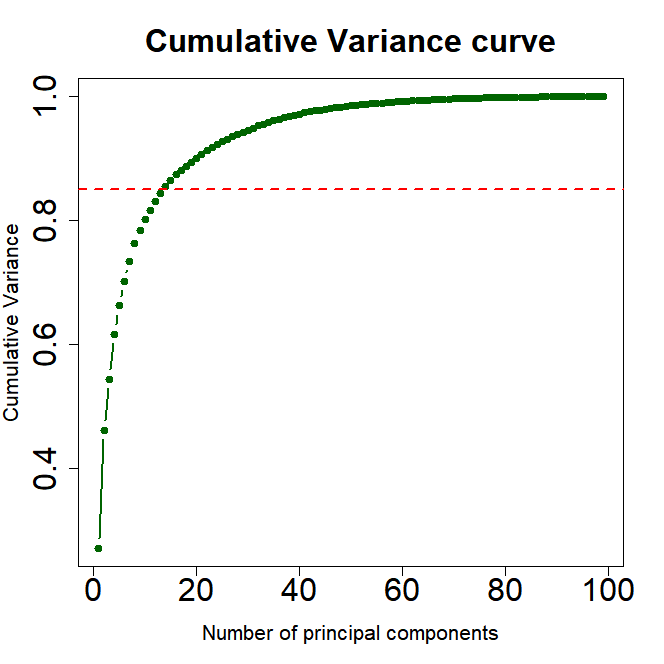}

	\end{minipage}\hfill
	\begin{minipage}{0.49\textwidth}
		\centering
		\includegraphics[width=\textwidth]{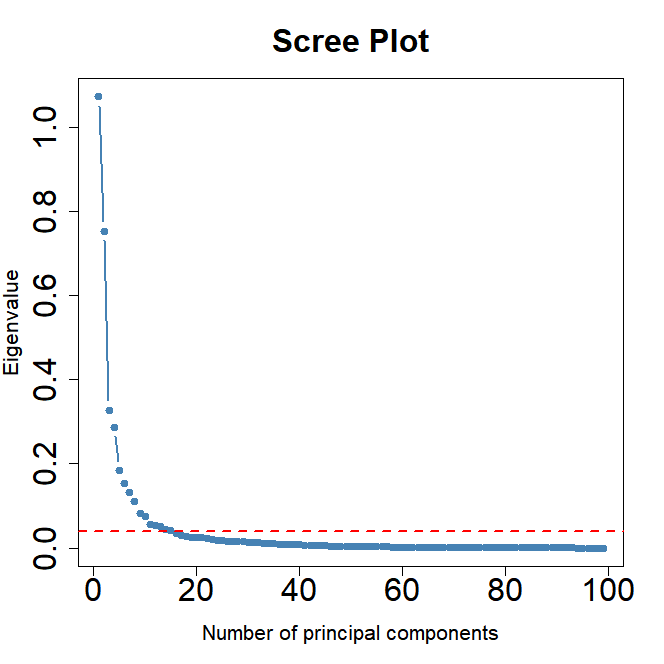}

	\end{minipage}
	\caption{Cumulative explanatory rate curves and gravel plots for the Crime data.}
	    \label{Fig4}
\end{figure}

\begin{table}[htbp]
	\centering
	\caption{Comparison of the total number of nonzero loadings under different cumulative variance explained levels for SPCA and SP-SPCA on the Crime dataset}
	\label{tab:crime_results}
	\begin{tabular*}{\textwidth}{@{\extracolsep{\fill}} ccccccccc}
		\toprule
		 Variance  & 10\% & 20\% & 30\% & 40\% & 50\% & 60\% & 70\% & 80\% \\
		\midrule
		SPCA       & 8   & 27  & 45  & 123  & 220  & 432  & 666  & 1485 \\
		SP-SPCA    & 14  & 24  & 40  & 59   & 104  & 347  & 644  & 1485 \\
		\bottomrule
	\end{tabular*}
\end{table}

Since the true underlying principal component structure of empirical data is typically unknown, our goal is to explain a substantial portion of the variance while obtaining a sparser and more interpretable principal component loading matrix. Therefore, the empirical analysis focuses on comparing the two methods in terms of the total number of nonzero loadings required to achieve the same cumulative variance explained. To determine the appropriate number of principal components, we combined the classical scree plot and the PCA cumulative variance plot.

As shown in Figure \ref{Fig4}, when the number of principal components reaches 15, the cumulative variance explained by PCA reaches 85\%, and the scree plot flattens beyond 15 components. Based on this, we selected the top 15 principal components for analysis. Given that the dataset contains 99 variables, the maximum possible number of nonzero loadings is 1,485.

\begin{figure}[!htbp]
	
	\centering
	\subfloat{\includegraphics[width=1\textwidth]{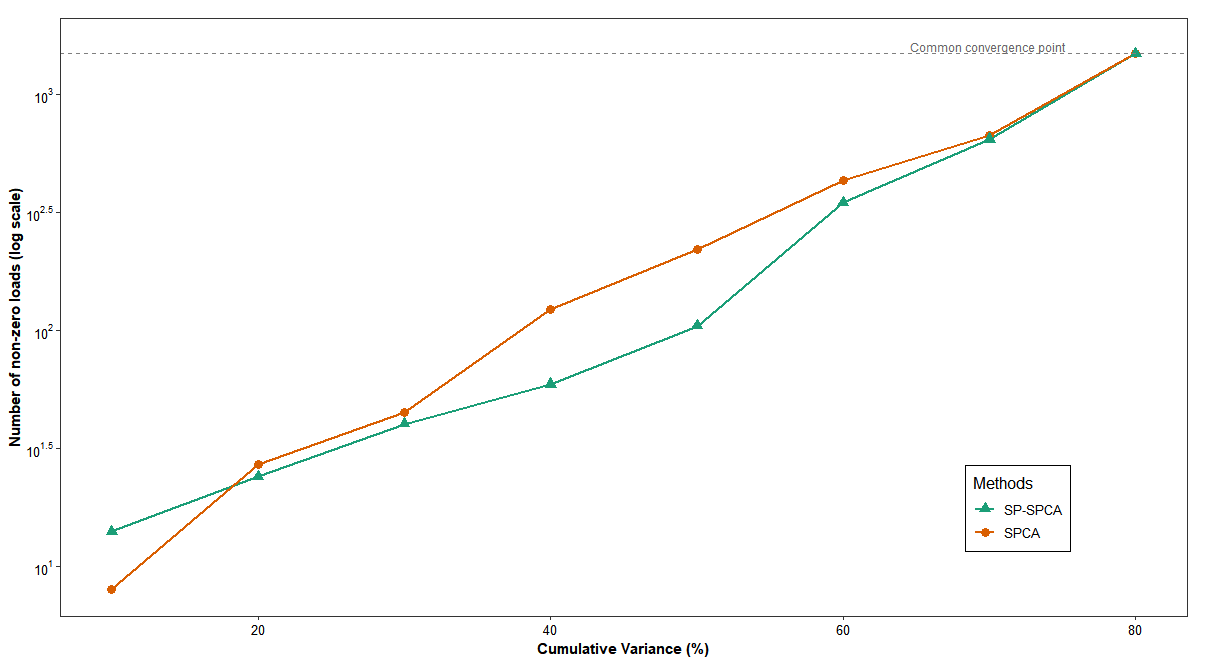}}
	
	\caption{Comparison of results for the Crime data .
	}
	\label{crime}
\end{figure}

Table \ref{tab:crime_results} and Figure \ref{crime} compare the performance of SPCA and SP-SPCA on the Crime dataset. The results show that, at the same cumulative variance explained, SP-SPCA consistently yields fewer nonzero loadings than SPCA, especially in the middle range of 40\%–60\% cumulative variance, where the gap is most pronounced. Specifically, to achieve 40\% cumulative variance, SPCA produces 123 nonzero loadings, while SP-SPCA requires only 59, nearly half. This sparsity advantage persists at 50\% and 60\% explained variance, with SPCA having 220 and 432 nonzero loadings, respectively, compared to only 104 and 347 for SP-SPCA, reflecting a significant reduction in dimensional burden. This indicates that SP-SPCA is more efficient in model compression and variable selection, preserving essential explanatory power while substantially reducing redundancy and enhancing the interpretability and usability of the principal components.

Furthermore, as the cumulative variance approaches 80\%, the number of nonzero loadings between the two methods converges (both around 1,485), indicating that near-complete recovery requires nearly full loadings for both. However, in practical applications, researchers and analysts are typically more interested in the middle range, where a balance between parsimony and explanatory power is crucial. In these key regions, SP-SPCA’s demonstrated sparsity advantage and superior ability to capture principal component structures highlight its superiority in sparse principal component analysis tasks.

\subsection{S$\&$P500 data}

For the high-dimensional data analysis, we selected the S\&P500 index and its constituent stocks’ closing prices from July 2023 to February 2024, resulting in a dataset with 150 observations and 503 variables, classifying it as high-dimensional. However, it is well known that stock prices typically follow a unit root process, exhibiting a random walk pattern over time, which leads to variance increasing with time. This violates the stationarity assumptions of classical statistical models and can distort the covariance matrix estimation in principal component analysis (PCA), causing SPCA to extract spurious factors. In contrast, log returns, defined as $r_t = \ln(P_t / P_{t-1})$, generally satisfy weak stationarity (i.e., constant mean and variance over time) and are more suitable for time series modeling. Therefore, consistent with standard practices in financial research, we apply sparse principal component analysis (SPCA) on the log-return-transformed data.

\begin{figure}[htbp]
	\centering
	\begin{minipage}{0.49\textwidth}
		\centering
		\includegraphics[width=\textwidth]{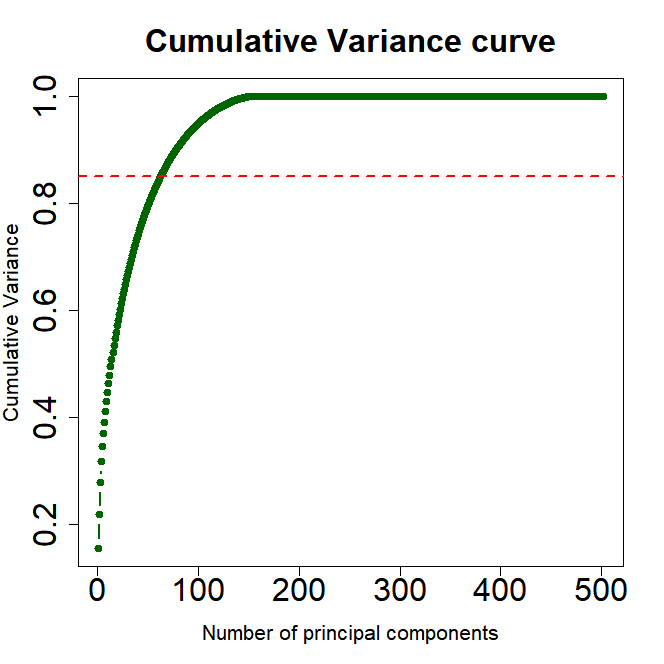}

	\end{minipage}\hfill
	\begin{minipage}{0.49\textwidth}
		\centering
		\includegraphics[width=\textwidth]{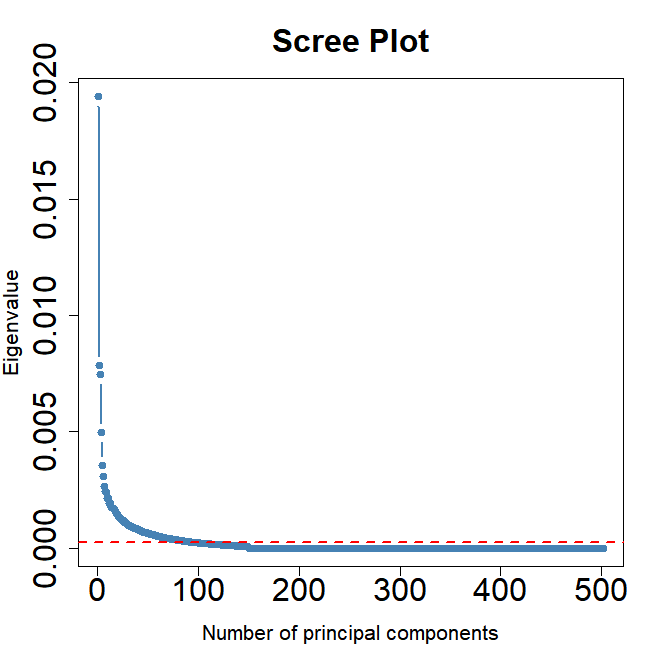}

	\end{minipage}
	\caption{Cumulative explanatory rate curves and gravel plots for the S\&P500 data.}
	\label{Fig5}
\end{figure}
\begin{table}[htbp]
	\centering
	\caption{Comparison of the total number of nonzero loadings under different cumulative variance explained levels for SPCA and SP-SPCA}
	\label{tab:sp500_results}
	\begin{tabular*}{\textwidth}{@{\extracolsep{\fill}} ccccccccc}
		\toprule
		Variance & 10\% & 20\% & 30\% & 40\% & 50\% & 60\% & 70\% & 80\% \\
		\midrule
		SPCA       & 16  & 75   & 209  & 2187  & 6319  & 9852  & 12856 & 25150 \\
		SP-SPCA    & 16  & 52   & 152  & 2002  & 5721  & 9055  & 12103 & 25150 \\
		\bottomrule
	\end{tabular*}
\end{table}
As shown in Figure \ref{Fig5}, when the number of principal components reaches 80, the cumulative variance explained by PCA reaches 85\%, and the scree plot flattens beyond 80 components. However, considering sparsity, we chose to extract the top 50 principal components for further analysis. Meanwhile, we note that this dataset contains 503 variables, and with 50 principal components selected, the maximum possible number of nonzero loadings is 25,150.

Consistent with the previous analysis, we compare the two methods based on the total number of nonzero loadings required to achieve the same cumulative variance explained. The number of principal components is determined using a combination of the classical scree plot and the PCA cumulative variance explained curve. Notably, this dataset contains 503 variables, and with 50 principal components selected, the maximum possible number of nonzero loadings is 25,150.

\begin{figure}[!htbp]
	
	\centering
	\subfloat{\includegraphics[width=1\textwidth]{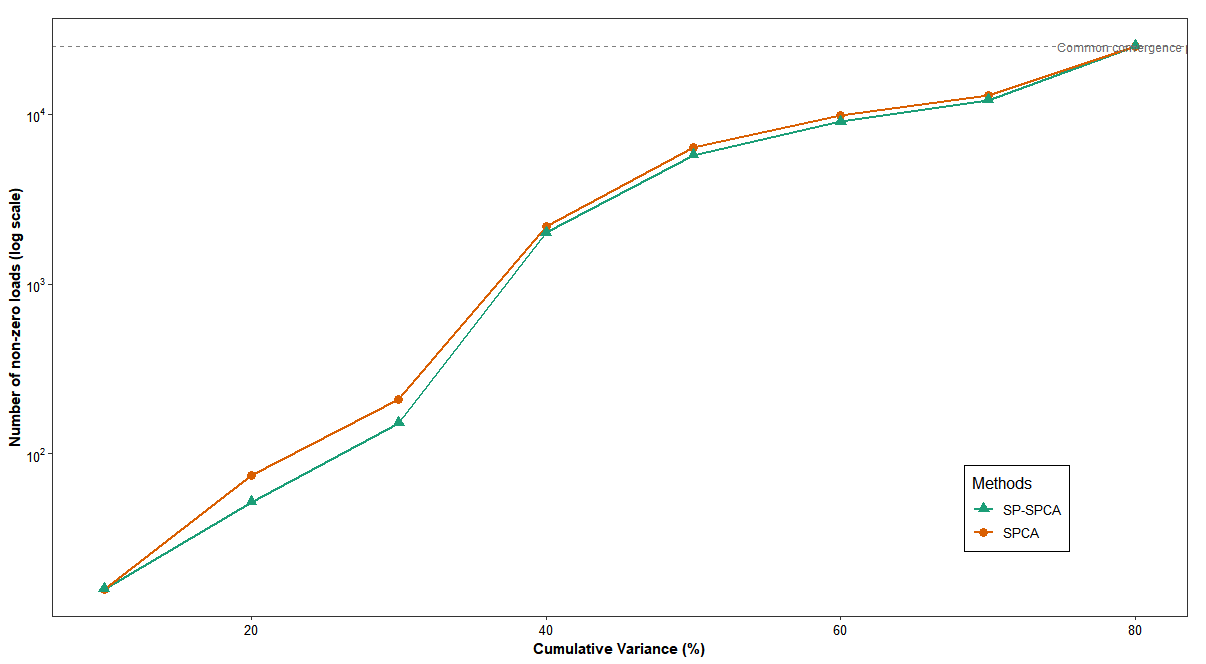}}
	
	\caption{Comparison of results for the S\&P500 data .
	}
	\label{sp500}
\end{figure}

As shown in Table \ref{tab:sp500_results} and Figure \ref{sp500}, the empirical analysis on the high-dimensional S\&P 500 financial dataset demonstrates the significant sparsity advantage of the SP-SPCA method. Specifically, within the 10\%–70\% cumulative variance explained range, SP-SPCA consistently yields fewer nonzero loadings than the traditional SPCA method, with relative reductions ranging from 17.3\% to 27.3\%. For example, at the 20\% variance level, SP-SPCA produces 52 nonzero loadings compared to SPCA’s 75, a reduction of 30.7\%. This gap is particularly pronounced at moderate explanation levels (40\%–60\%); for instance, at 50\% cumulative variance, SP-SPCA’s 5,721 nonzero loadings are 9.5\% fewer than SPCA’s 6,319, indicating that SP-SPCA more effectively eliminates redundant variable associations while preserving the main covariance structure.

Additionally, in the low-variance phase ($<30\%$), SP-SPCA exhibits a steeper sparsity growth curve, with a slope 38.6\% lower than SPCA in the 20\%–30\% interval, suggesting that it captures the same level of variance using fewer variables. In the high-variance phase ($>70\%$), although both methods eventually converge to the same maximum number of loadings (25,150), SP-SPCA’s convergence is more gradual, with a growth rate of 107.8\% versus SPCA’s 95.7\% in the 70\%–80\% interval, revealing its superior sparsity control even under high variance explanation demands. Overall, SP-SPCA achieves comparable or even better cumulative variance explanation with fewer nonzero loadings, making it more conducive to simplifying subsequent analyses, improving model robustness, and enhancing feature selection and interpretability, thus providing a superior tool for high-dimensional data analysis.

\section{Conclusion}\label{sec6}

Building upon the SPPCSO framework, this chapter proposes a single-parameter sparse principal component analysis method (SP-SPCA) and systematically investigates its performance and advantages. Through extensive numerical simulations and empirical analyses on high-dimensional real-world datasets, the proposed SP-SPCA is comprehensively compared with the traditional SPCA in terms of variable selection, information retention, and robustness.

In simulated settings with clear low-dimensional structures, the results show that SP-SPCA can accurately identify block structures of variables generated by different latent factors. While maintaining a high cumulative explained variance, SP-SPCA produces sparse loading representations that are more consistent with theoretical expectations. Compared with traditional SPCA, SP-SPCA demonstrates greater stability in avoiding the misselection of irrelevant or weakly correlated variables and effectively mitigates the loss of important information caused by excessive sparsification.

In high-dimensional simulation experiments, as the number of variables, noise proportion, and latent factors gradually increase, the cumulative explained variance of SPCA decreases significantly and exhibits considerable fluctuations. In contrast, SP-SPCA consistently maintains a higher and more stable explanatory capacity, indicating stronger robustness and generalization ability in high-dimensional noisy environments. These findings suggest that SP-SPCA provides a better balance between sparsity and information preservation, enabling more effective discrimination between signal variables and noise variables.

Empirical analyses on the Crime dataset and the S\&P500 high-dimensional financial dataset further support these observations. Under the same level of cumulative explained variance, SP-SPCA requires significantly fewer nonzero loadings than SPCA, with the difference being particularly pronounced in the moderate explanation range. This indicates that SP-SPCA can capture the primary covariance structure of the data with lower model complexity, thereby improving the interpretability and practical usability of the principal components. Moreover, after stationarizing the financial time series, SP-SPCA continues to exhibit strong sparsity control and efficient information extraction, demonstrating its potential applicability in real high-dimensional data analysis.

Overall, the results indicate that SP-SPCA outperforms traditional SPCA in terms of variable selection accuracy, information retention, and robustness in high-dimensional settings. The proposed method therefore provides a more effective approach for principal component analysis that simultaneously achieves dimensionality reduction and model interpretability in high-dimensional data scenarios.

\section{Acknowledgments}\label{sec7}
This work is supported by the National Natural Science Foundation of
China [Grant No. 12371281].The authors would like to express their sincere gratitude to the Associate Editor and the referees for their invaluable comments, which significantly contributed to enhancing the paper.

\end{document}